\newtheorem{prop}{Proposition}
\tikzset{
    >=stealth',
    punkt/.style={
           circle,
           draw=black, thick,
           minimum height=1.75em,
           inner sep=0pt,
           text centered},
    pil/.style={
           ->,
           thick}
}
\newcommand{\Pa}[1]{\text{Pa}({#1}; \mathcal{G})}
\newcommand{\Ne}[1]{\text{Ne}({#1}; \mathcal{S})}
\newcommand*{\indep}{%
  \mathbin{%
    \mathpalette{\@indep}{}%
  }%
}
\newcommand*{\nindep}{%
  \mathbin{
    \mathpalette{\@indep}{\not}
  }%
}
\newcommand*{\@indep}[2]{%
  \sbox0{$#1\perp\m@th$}
  \sbox2{$#1=$}
  \sbox4{$#1\vcenter{}$}
  \rlap{\copy0}
  \dimen@=\dimexpr\ht2-\ht4-.2pt\relax
  \kern\dimen@
  {#2}%
  \kern\dimen@
  \copy0 
} 
\begin{document}\thispagestyle{empty}
    \title{Causal Generative Neural Networks}


    \author{\textbf{%
            Olivier Goudet\thanks{Joint first author 
            (\texttt{firstname.lastname@lri.fr}). Rest of authors ordered alphabetically.} $\,^1$,
            Diviyan Kalainathan\footnotemark[1] $\,^1$,
            Philippe Caillou$^1$,
            Isabelle Guyon$^1$,
            }\\
            \textbf{%
            David Lopez-Paz$^2$,
            Mich\`ele Sebag$^1$}\\
            $^1$TAU, CNRS -- INRIA $-$ LRI, Univ. Paris-Sud, Univ.
            Paris-Saclay\\
            $^2$Facebook AI Research
            }
    
    \maketitle
    \begin{abstract}
We present Causal Generative Neural Networks (CGNNs) to learn functional causal models from observational data. CGNNs leverage conditional independencies and distributional asymmetries to discover bivariate and multivariate causal structures. CGNNs make no assumption regarding the lack of confounders, and learn  a differentiable generative model of the data by using backpropagation. Extensive experiments show their good performances comparatively to the state of the art in observational causal discovery on both simulated and real data, with respect to cause-effect inference, v-structure identification, and multivariate causal discovery. 
\end{abstract}

\section{Introduction }

Deep learning models have shown tremendous predictive abilities in image classification, speech
recognition, language translation, game playing, and much more \cite{Goodfellow-et-al-2016}.
However, they often mistake
correlation for causation \cite{stock}, which can have catastrophic consequences for agents that plan and decide from observation.

The gold standard to discover causal relations is to perform experiments
\cite{pearl2003causality}. However, whenever experiments are expensive,
unethical, or impossible to realize, there is a
need for \emph{observational causal discovery}, that is, the estimation of
causal relations from observation alone \cite{spirtes2000causation,PetJanSch17}. In observational causal discovery, some authors exploit 
distributional asymmetries to discover bivariate causal relations
\cite{hoyer2009nonlinear,zhang2009identifiability,daniusis2012inferring,stegle2010probabilistic,lopez2015towards,fonollosa2016conditional}, while others rely on conditional independence to discover structures on three or more variables \cite{spirtes2000causation,chickering2002optimal}. Different approaches rely on different but equally strong assumptions, such as linearity \cite{shimizu2006linear}, additive noise \cite{zhang2009identifiability,peters2014causal}, determinism
\cite{daniusis2012inferring}, or a large corpus of annotated causal relations
\cite{lopez2015towards,fonollosa2016conditional}.  Among the most promising approaches are score-based methods \cite{chickering2002optimal}, assuming the existence of external \emph{score-functions} that must be powerful enough to detect diverse causal relations. 
Finally, most methods are not differentiable, thus unsuited for deep learning pipelines.
 
The ambition of \textbf{Causal Generative Neural Network (CGNNs)} is to provide a unified approach. CGNNs learn functional causal models (Section~\ref{sec:fcm}) as generative neural networks, trained by backpropagation to minimize the Maximum Mean Discrepancy (MMD) \cite{gretton2007kernel,li2015generative} between the observational and the generated data (Section~\ref{sec:cgnn}). Leveraging the representational power of deep generative models, CGNNs account for both distributional asymmetries and conditional independencies, tackle the bivariate and multivariate cases, and deal with hidden variables (confounders). They estimate both the causal graph underlying
the data and the full joint distribution, through the architecture and the weights of generative networks. Unlike previous 
approaches, CGNNs allow non-additive noise terms to model flexible conditional distributions. Lastly, they define differentiable joint distributions, which can be embedded within deep architectures.
Extensive experiments show the state-of-the-art performance of CGNNs (Section~\ref{sec:exps}) on cause-effect inference, v-structure identification, and multivariate causal discovery with hidden variables.\footnote{Code available at \url{https://github.com/GoudetOlivier/CGNN}. Datasets available at \url{http://dx.doi.org/10.7910/DVN/3757KX} and \url{http://dx.doi.org/10.7910/DVN/UZMB69}.}

    \section{The language of causality: FCMs}
\label{sec:fcm}

A Functional Causal Model (FCM) upon a random variable vector $X = (X_1, \ldots, X_d)$ is a triplet $C = (\mathcal{G}, f, \mathcal{E})$, representing a set of equations: 
\begin{equation}   {X}_i  \leftarrow {f}_i({X}_{\Pa{i}}, {E}_i), {E}_i \sim {\mathcal{E}}, \mbox{ for } i=1,\ldots, d
    \label{eq:1}
\end{equation}

Each equation characterizes the direct
causal relation from the set of causes $X_{\Pa{i}} \subset \{X_1, \ldots, X_d\}$ to observed variable $X_i$, described by some \emph{causal mechanism} $f_i$ up to the effects of noise variable $E_i$ drawn after distribution $\mathcal{E}$, accounting for all unobserved phenomenons. For simplicity, $X_i$ interchangeably denotes an observed variable and a node in graph $\mathcal{G}$. There exists a direct causal relation from $X_j$ to $X_i$, written $X_j \to X_i$, iff there exists a directed edge from $X_j$ to $X_i$ in $\mathcal{G}$. 
In the following, we restrict ourselves to considering Directed Acyclic Graph (DAG) $\mathcal{G}$ (Fig.~\ref{figure:causalnetwork}) and $\cal E$ is set to the uniform distribution on $[0,1]$, $U[0,1]$.

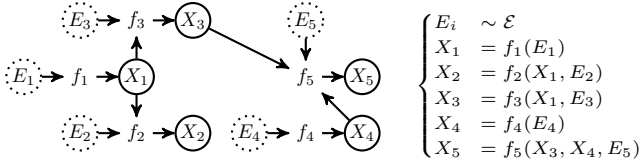
\begin{figure}[h]
    \begin{subfigure}{0.2\textwidth}
    \begin{center}
    \begin{tikzpicture}[scale=0.75, every node/.style={scale=0.75},node distance=2cm]
        \node[punkt, dotted] (e1) at (0,0) {$E_1$};
        \node[] (f1) at (1,0) {$f_1$};
        \node[punkt] (x1) at (2,0) {$X_1$};
        \draw[pil] (e1) -- (f1);
        \draw[pil] (f1) -- (x1);
        
        \node[punkt, dotted] (e2) at (1,-1) {$E_2$};
        \node[] (f2) at (2,-1) {$f_2$};
        \node[punkt] (x2) at (3,-1) {$X_2$};
        \draw[pil] (e2) -- (f2);
        \draw[pil] (f2) -- (x2);
        \draw[pil] (x1) -- (f2);
        
        \node[punkt, dotted] (e3) at (1,1) {$E_3$};
        \node[] (f3) at (2,1) {$f_3$};
        \node[punkt] (x3) at (3,1) {$X_3$};
        \draw[pil] (e3) -- (f3);
        \draw[pil] (f3) -- (x3);
        \draw[pil] (x1) -- (f3);
        
        \node[punkt, dotted] (e4) at (4,-1) {$E_4$};
        \node[] (f4) at (5,-1) {$f_4$};
        \node[punkt] (x4) at (6,-1) {$X_4$};
        \draw[pil] (e4) -- (f4);
        \draw[pil] (f4) -- (x4);
        
        \node[punkt, dotted] (e5) at (5,1) {$E_5$};
        \node[] (f5) at (5,0) {$f_5$};
        \node[punkt] (x5) at (6,0) {$X_5$};
        \draw[pil] (e5) -- (f5);
        \draw[pil] (f5) -- (x5);
        
        \draw[pil] (x3) -- (f5);
        \draw[pil] (x4) -- (f5);
    \end{tikzpicture}
    \end{center}
    \end{subfigure}
    \hfill
     \begin{subfigure}{0.18\textwidth}
	\scriptsize	
    \[
    \begin{cases}
        E_i &\sim \mathcal{E} \\
        X_1 &= f_1(E_1) \\
        X_2 &= f_2(X_1,E_2) \\
        X_3 &= f_3(X_1,E_3) \\
        X_4 &= f_4(E_4) \\
        X_5 &= f_5(X_3,X_4,E_5)
    \end{cases}\]
    \end{subfigure}
    \caption{Example FCM for $X =
    (X_1, \ldots, X_5)$.}
    \label{figure:causalnetwork}
\end{figure}

\subsection{Generative models and interventions} \label{sec:hathat}

The generative model associated to 
FCM $(\mathcal{G}, f, \mathcal{E})$ proceeds by first drawing  $e_i \sim \mathcal{E}$ for all $i=1,\ldots,d$, then in topological order of $\mathcal{G}$ computing $x_i = f_i(x_{\Pa{i}}, e_i)$.

Importantly, the FCM supports interventions, that is, freezing a variable $X_i$ to some constant $v_i$. The resulting joint distribution noted $P_{\text{do}(X_i = v_i)}(X)$, called  \emph{interventional distribution} \cite{pearl2009causality}, can be computed from the FCM by discarding all causal influences on $X_i$ and clamping its value to $v_i$.  It is emphasized that intervening
is different from conditioning (\emph{correlation does not imply causation}). The knowledge of interventional distributions is essential for e.g., public policy makers, wanting to estimate the overall effects of a decision on a given variable. 

\subsection{Formal background and notations}

In this section, we introduce notations and definitions and prove the representational power of FCMs.

Two random variables $(X, Y)$ are
\emph{conditionally independent} given $Z$ if $P(X, Y | Z) = P(X|Z) P(Y|Z)$.
Three random variables $(X, Y, Z)$ form a v-structure iff 
$X \to Z \leftarrow Y$.  The random variable $Z$ is a confounder
(or common cause) of the pair $(X, Y)$ if  $(X, Y, Z)$ have
causal structure $X \leftarrow Z \to Y$.  The skeleton $\mathcal{U}$ of a DAG
$\mathcal{G}$ is obtained by replacing all the directed edges in $\mathcal{G}$
by undirected edges.

Discovering the causal structure of a random vector is a difficult
task in all generality. For this reason, the literature
in causal inference relies on a set of common assumptions
\cite{pearl2003causality}. The \emph{causal sufficiency} assumption states
that there are no unobserved confounders. The \emph{causal Markov} assumption
states that all the d-separations in the causal graph $\mathcal{G}$ imply
conditional independences in the observational distribution $P$. The
\emph{causal faithfulness} assumption states that all the conditional
independences in the observational distribution $P$ imply d-separations in the
causal graph $\mathcal{G}$. A \emph{Markov equivalence class} denotes the set
of graphs with same set of d-separations. 



 \begin{prop}{\textbf{Representing joint distributions with FCMs}\label{prop1}}
 
 Let $X = (X_1, \ldots, X_d)$ denote a set of continuous random variables with joint distribution $P$, and further assume that the joint density function $h$ of $P$ is continuous and strictly positive on a compact subset of $\mathbb{R}^{d}$, and zero elsewhere. Letting $\cal G$ be a DAG such that $P$ can be factorized along $\cal G$, 
 $$ P(X) = \prod_i P(X_i | X_{\Pa{i}})$$
 there exists $f = (f_1, \ldots, f_d)$ with $f_i$ a continuous function with compact support in $\mathbb{R}^{|\Pa{i}|}\times [0,1]$ such that $P(X)$ equals the generative model defined from FCM $({\cal G}, f, {\cal E})$. 
\end{prop}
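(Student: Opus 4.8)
The plan is to build each causal mechanism $f_i$ as a \emph{conditional quantile function} and to invoke the probability integral transform, so that pushing a uniform noise $E_i \sim U[0,1]$ through $f_i$ reproduces the conditional law $P(X_i \mid X_{\Pa{i}})$. Concretely, for each $i$ I would let
\[
F_i(t \mid x_{\Pa{i}}) = P\!\left(X_i \le t \mid X_{\Pa{i}} = x_{\Pa{i}}\right)
\]
denote the conditional CDF of $X_i$ given its parents, and set $f_i(x_{\Pa{i}}, e_i) = F_i^{-1}(e_i \mid x_{\Pa{i}})$, the inverse in the first argument. The classical inverse-transform lemma then says that, for each fixed $x_{\Pa{i}}$, the push-forward of $U[0,1]$ through $f_i(x_{\Pa{i}}, \cdot)$ is precisely $P(X_i \mid X_{\Pa{i}} = x_{\Pa{i}})$.

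First I would establish the regularity needed to make this inverse well behaved. Since $h$ is continuous and strictly positive on its compact support $K$ and zero elsewhere, integrating out coordinates shows that every marginal density is continuous, and the marginal over $X_{\Pa{i}}$ is strictly positive on the projection of $K$ onto the parent coordinates. Hence the conditional density $p(t \mid x_{\Pa{i}})$ is jointly continuous in $(t, x_{\Pa{i}})$ on the relevant compact region and strictly positive on each support slice, which makes $F_i(\cdot \mid x_{\Pa{i}})$ continuous, strictly increasing, and jointly continuous in $(t, x_{\Pa{i}})$. Its inverse $f_i$ is therefore well-defined and jointly continuous in $(x_{\Pa{i}}, e_i)$, by a monotone-inverse/uniform-continuity argument or equivalently by the implicit function theorem applied to $F_i(f_i, x_{\Pa{i}}) = e_i$. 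Because the range of $f_i$ lies in the bounded projection of $K$ onto the $X_i$-axis and its effective domain is the compact product (support of the parents)$\times[0,1]$, I would finish by extending $f_i$ to all of $\mathbb{R}^{|\Pa{i}|}\times[0,1]$ via the Tietze extension theorem and multiplying by a continuous cutoff equal to one on that domain, yielding a continuous $f_i$ of compact support that is unchanged on the region ever queried by the generative process.

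It then remains to verify that the FCM $(\mathcal{G}, f, \mathcal{E})$ generates $P$. I would argue by induction along a topological order of $\mathcal{G}$: drawing $e_1, \ldots, e_d$ i.i.d.\ uniform and setting $x_i = f_i(x_{\Pa{i}}, e_i)$ produces at each node a value whose conditional law, given everything generated before it, equals $P(X_i \mid X_{\Pa{i}})$, since $E_i$ is independent of the previously generated (non-descendant) variables and $x_{\Pa{i}}$ is a function of them. Multiplying these conditionals along the order recovers $\prod_i P(X_i \mid X_{\Pa{i}}) = P(X)$, which is exactly what the factorization along $\mathcal{G}$ guarantees.

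The main obstacle is the joint continuity of the quantile map $f_i$, and this is precisely where the hypotheses on $h$ are indispensable. Strict positivity forces $F_i(\cdot \mid x_{\Pa{i}})$ to be \emph{strictly} increasing, ruling out flat plateaus that would otherwise turn the quantile function into a discontinuous, jumping map; continuity of $h$ delivers the joint continuity of $F_i$ and hence of its inverse. A subtle point I would need to address is the geometry of the support: if for some parent value the slice $\{t : (t, x_{\Pa{i}}) \in \mathrm{proj}(K)\}$ failed to be an interval, the conditional CDF would again develop plateaus and $f_i$ would jump. The compactness-plus-strict-positivity assumption, read so that $K$ has interval-shaped (e.g.\ convex) slices, is what secures the strictly monotone continuous bijection onto $[0,1]$ that makes the whole construction go through.
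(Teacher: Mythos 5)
Your proposal is correct and follows essentially the same route as the paper's own proof: defining each $f_i$ as the conditional quantile function $F_i^{-1}(\cdot \mid x_{\Pa{i}})$ and proceeding by induction along a topological order of $\mathcal{G}$. You are in fact more careful than the paper on the points it glosses over --- joint continuity of the inverse in $(x_{\Pa{i}}, e_i)$, the Tietze extension to get compact support, and the need for interval-shaped support slices to avoid plateaus in the conditional CDF.
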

\begin{proof}
By induction on the topological order of $\cal G$, taking inspiration from \cite{carlier2016vector}. Let $X_i$ be such that $|\Pa{i}|=0$ and consider the cumulative distribution $F_i(x_i)$ defined over the domain of $X_i$ ($F_i(x_i) = Pr(X_i < x_i)$). $F_i$ is strictly monotonous as the joint density function is strictly positive therefore its inverse, the quantile function $Q_i: [0,1] \mapsto dom(X_i)$ is defined and continuous. By construction, $Q_i(e_i) =F_i^{-1}(x_i)$ and setting $Q_i = f_i$ yields the result.\\
Assume $f_i$ be defined for all variables $X_i$ with topological order less than $m$.  Let $X_j$ with topological order $m$ and $Z$ the vector of its parent variables. For any noise vector $e = (e_i, i \in \Pa{j})$ let $z = (x_i,  i \in \Pa{j})$ be the value vector of variables in $Z$ defined from $e$. The conditional cumulative distribution $F_j(x_j | Z=z) = Pr(X_j < x_j | Z=z)$ is strictly continuous and monotonous wrt $x_j$, and can be inverted using the same argument as above. Defining $f_j(z,e_j) = F_j^{-1}(z,x_j)$ yields the result. 
\end{proof}




    \section{Causal generative neural networks}
\label{sec:cgnn}
Let $X = (X_1, \ldots, X_d)$ denote a set of continuous 
random variables with joint distribution $P$. Under same conditions as in Proposition 1, ($P(X)$ being decomposable to graph $\cal G$, with continuous and strictly positive joint density function on a compact in $\mathbb{R}^d$ and zero elsewhere), it is shown that there exists a generative neural network called \textbf{CGNN} (Causal Generative Neural Network), that approximates $P(X)$ with arbitrary accuracy. 

\subsection{Approximating continuous FCMs with CGNN} 
Firstly, given $\cal G$, it is shown that there exists a set of networks $\hat{f} = (\hat{f}_1, \ldots, \hat{f}_d)$  such that the generative model $\hat P$ defined by $\hat{X}_i = \hat{f}_i(\hat{X}_{\Pa{i}}, E_i)$ with $E_i \sim {\cal E}$ defines a joint distribution arbitrarily close to $P$. 

 \begin{prop}{\label{prop2}}
For $m \in [[1,d]]$, let $Z_m$ denote the set of variables with topological order less than $m$ and let $d_m$ be its size. For any $d_m$-dimensional vector of noise values $e^{(m)}$, let $z_m(e^{(m)})$ (resp. $\widehat{z_m}(e^{(m)})$) be the vector of values computed in topological order from $f$ (resp. $\hat f$). 
For any $\epsilon > 0$, there exists a set of networks $\hat{f}$ with architecture $\cal G$ such that 
\begin{equation}
\forall e^{(m)},  \|z_m(e^{(m)})- \widehat{z_m}(e^{(m)})\| < \epsilon
\label{eq:prop2}
\end{equation}
\end{prop}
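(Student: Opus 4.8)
The plan is to prove the statement by induction on the topological order $m$, mirroring the structure of the proof of Proposition~\ref{prop1}. The two essential tools are the universal approximation theorem for neural networks (any continuous function on a compact set can be approximated uniformly to arbitrary accuracy by a network with one sufficiently wide hidden layer) and the uniform continuity of each $f_i$ on its compact domain. The induction must be carried out with care, because an error committed at an early variable is fed, through the composition of the $f_i$, into every downstream variable; hence the accuracy demanded of each network $\hat f_i$ will depend on the variables it influences.

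For the base case ($m=1$), every variable $X_i$ of topological order $0$ satisfies $x_i = f_i(e_i)$ with $f_i$ continuous on the compact $[0,1]$. For each such source variable the universal approximation theorem yields a network $\hat f_i$ with $\sup_{e_i \in [0,1]} |f_i(e_i) - \hat f_i(e_i)| < \delta_1$. Since there are only finitely many sources, taking $\delta_1$ small enough gives $\|z_1(e^{(1)}) - \widehat{z_1}(e^{(1)})\| < \epsilon$ uniformly in $e^{(1)}$.

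For the inductive step, assume the $\hat f_i$ have been fixed for all variables of topological order $< m$ so that $\|z_{m}(e^{(m)}) - \widehat{z_{m}}(e^{(m)})\|$ is uniformly below some tolerance $\eta$ still to be chosen. Consider a variable $X_j$ of topological order $m$, with true parent-value vector $z$ and approximate vector $\hat z$, and split
\[
|f_j(z,e_j) - \hat f_j(\hat z, e_j)| \le |f_j(z,e_j) - f_j(\hat z, e_j)| + |f_j(\hat z, e_j) - \hat f_j(\hat z, e_j)|.
\]
The second term is driven below $\delta_m$ by the universal approximation theorem applied to $f_j$. The first term is controlled by uniform continuity of $f_j$: there is a modulus $\omega_j$ with $|f_j(z,e_j) - f_j(\hat z, e_j)| \le \omega_j(\|z - \hat z\|) \le \omega_j(\eta)$, which tends to $0$ as $\eta \to 0$ by the inductive hypothesis. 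Choosing $\eta$ and $\delta_m$ small enough and accumulating over the finitely many variables of order $m$ (and the inherited error from $Z_m$) yields the desired bound at level $m+1$.

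The main obstacle is the bookkeeping of these compounded errors. Since there are only $d$ variables, I would fix the target accuracies \emph{backwards}: starting from $\epsilon$ at the last level, I shrink the tolerances $\eta$ and $\delta_m$ at each earlier level so that the total accumulated error stays below $\epsilon$. A secondary technical point is that the approximate parent values $\hat z$ may fall slightly outside the original compact support of $f_j$; this is handled by extending $f_j$ continuously to a slightly enlarged compact set (e.g.\ via the Tietze extension theorem) and applying both the universal approximation theorem and the uniform-continuity estimate on that neighborhood, so the bounds remain valid whenever $\|z - \hat z\|$ is below the chosen tolerance.
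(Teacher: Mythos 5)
Your proposal is correct and follows essentially the same route as the paper: induction on topological order, the universal approximation theorem for each causal mechanism, and a triangle-inequality split that controls the propagated parent error via uniform continuity, with tolerances tightened at earlier levels. The only divergence is that you split the error through $f_j(\hat z, e_j)$ and use the modulus of continuity of the true $f_j$ (which is what forces your Tietze-extension remark, since $\hat z$ may leave the support of $f_j$), whereas the paper splits through $\hat f_j(z_m, e_j)$ and invokes the continuity of the network $\hat f_j$ instead, so it never needs to evaluate $f_j$ off its domain.
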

\begin{proof}
By induction on the topological order of $\cal G$. Let $X_i$ be such that $|\Pa{i}|=0$. 
Following the universal approximation theorem \cite{cybenko1989approximation}, as $f_i$ is a continuous function over a compact of $\mathbb{R}$, there exists a neural net $\hat{f_{i}}$ such that $\|f_i - \hat{f_{i}}\|_\infty < \epsilon/d_1$. Thus  Eq. \ref{eq:prop2} holds for the set of networks $\hat{f_i}$ for $i$ ranging over variables with topological order 0.\\
Let us assume that Prop. 2 holds up to $m$, and let us assume for brevity that there exists a single variable $X_j$ with topological order $m +1$. Letting $\hat{f_j}$ be such that $\|f_j - \hat{f_j}\|_\infty < \epsilon/3$ (based on the universal approximation property), letting $\delta$ be such that for all $u$ $\|\hat f_j(u) - \hat f_j(u+\delta)\|< \epsilon/3$ (by absolute continuity) and letting $\hat f_i$ satisfying Eq. \ref{eq:prop2} for $i$ with topological order less than $m$ for $min(\epsilon/3,\delta)/d_{m}$, it comes: 
$\|(z_m,f_j(z_m,e_j)) - (\hat z_m,\hat{f_j}(\hat{z_m}, e_j))\| \le \|z_m - \hat z_m\| + |f_j(z_m,e_j) - \hat{f_j}(z_m, e_j)| +  | \hat{f_j}(z_m,e_j) - \hat{f_j}(\hat{z_m}, e_j)| < \epsilon/3 + \epsilon/3 + \epsilon/3$, which ends the proof.
\end{proof}

\subsection{Scoring metric \label{scoremetric}}

The architecture and the network weights are trained and optimized using a score-based approach \cite{chickering2002optimal}. The ideal score, to be minimized, is the distance between the joint distribution $P$ associated with the ground truth FCM, and the joint distribution $\widehat{P}$ defined by the estimated $(\hat{\cal G}, \hat{f}, {\cal E})$. A tractable approximation thereof is given by the Maximum Mean Discrepancy (MMD) \cite{gretton2007kernel} between the $n$-sample observational data $\cal D$, and an $n$- sample $\widehat{\cal D}$ sampled after $\widehat{P}$. Overall, $\widehat{C}$ is trained by minimizing   
\begin{equation}
  S(\widehat{\mathcal{G}}, \mathcal{D}) =
  - \widehat{\text{MMD}}_k(\mathcal{D}, \widehat{\mathcal{D}}) - \lambda 
    |\widehat{\mathcal{G}}|,
  \label{eq:the_loss}
\end{equation}
with $|\widehat{\mathcal{G}}|$ the number of edges in $\hat{\cal G}$ and $\widehat{MMD}_k$ defined as:
\begin{footnotesize}
\begin{equation*}
\frac{1}{n^2} \sum_{i, j = 1}^{n} k(x_i, x_j) +
\frac{1}{n^2} \sum_{i, j = 1}^{n} k(\hat{x}_i, \hat{x}_j)
- \frac{2}{n^2} \sum_{i,j = 1}^n k(x_i, \hat{x}_j),
\end{equation*}
\end{footnotesize}
\noindent where kernel $k$ usually is taken as the Gaussian kernel ($k(x,x') = \exp(-\gamma \|x-x'\|_2^2)$). The MMD statistic, with quadratic complexity in the sample size, has the good property that it is zero if and only if $P
= \hat{P}$ as $n$ goes to infinity \cite{gretton2007kernel}. For scalability, a linear approximation of the MMD statistics based on $m$ random features \cite{dlp},
called $\widehat{\text{MMD}}_k^m$, will also be used in the experiments. Due to the Gaussian kernel being differentiable, $\widehat{MMD}_k$ and $\widehat{\text{MMD}}_k^m$ are differentiable, and backpropagation can be used to learn the CGNN made of networks $\hat f_i$ structured along $\hat{\cal G}$. 

It is shown that the distribution $\hat P$ of the CGNN can estimate the true observational distribution of the (unknown) FCM up to an arbitrary precision, under the assumption of an infinite observational sample:

\begin{prop}{\label{prop3}}
Let  $\mathcal{D}$ be an infinite observational sample generated from $({\cal G}, f, {\cal E})$.
With same notations as in Prop. 2, for every $\epsilon >0$, there exists a set $\widehat{f_\epsilon} = (\hat f_1, \ldots \hat f_d)$ such that the MMD between $\cal D$ and an infinite size sample $\widehat{\cal D}_\ell$ generated from $({\cal G},\widehat{f_\epsilon},\cal E)$ is less than $\epsilon$.
 \end{prop}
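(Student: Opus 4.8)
The plan is to pass to the infinite-sample limit, in which the empirical quantity $\widehat{\text{MMD}}_k$ between $\mathcal{D}$ and $\widehat{\mathcal{D}}_\ell$ converges to the population MMD between the true observational distribution $P$ (the law of $\mathcal{D}$) and the CGNN distribution $\widehat{P}$ (the law of $\widehat{\mathcal{D}}_\ell$). Writing the kernel mean embeddings $\mu_P = \mathbb{E}_{X \sim P}[\phi(X)]$ and $\mu_{\widehat{P}} = \mathbb{E}_{\widehat{X} \sim \widehat{P}}[\phi(\widehat{X})]$ with feature map $\phi(x) = k(x, \cdot)$ living in the RKHS $\mathcal{H}$ of the Gaussian kernel, this limiting value is exactly $\|\mu_P - \mu_{\widehat{P}}\|_{\mathcal{H}}$. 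It therefore suffices to show that this Hilbert-space distance can be driven below $\epsilon$ by a suitable choice of $\widehat{f_\epsilon}$.

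The key observation is that $P$ and $\widehat{P}$ are pushforwards of the \emph{same} product noise law $\mathcal{E}^d$: $P$ is the law of $z_d(e)$ and $\widehat{P}$ the law of $\widehat{z_d}(e)$ for $e \sim \mathcal{E}^d$, where $z_d$ and $\widehat{z_d}$ are the full-vector evaluation maps of Proposition 2 taken at $m = d$ (so that $Z_m$ is the entire vector $X$). Feeding the same noise draw through both maps yields a natural coupling of $P$ and $\widehat{P}$. I would first invoke Proposition 2 with $m = d$ to obtain a set of networks $\widehat{f}$ with architecture $\mathcal{G}$ such that $\|z_d(e) - \widehat{z_d}(e)\| < \delta$ for every $e$, with $\delta > 0$ a tolerance fixed later.

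Under this coupling, $\mu_P - \mu_{\widehat{P}} = \mathbb{E}_e[\phi(z_d(e)) - \phi(\widehat{z_d}(e))]$, so by Jensen's inequality in $\mathcal{H}$,
\[
\|\mu_P - \mu_{\widehat{P}}\|_{\mathcal{H}} \le \mathbb{E}_e\big[\|\phi(z_d(e)) - \phi(\widehat{z_d}(e))\|_{\mathcal{H}}\big].
\]
For the Gaussian kernel one has the closed form
\[
\|\phi(x) - \phi(x')\|_{\mathcal{H}}^2 = k(x,x) + k(x',x') - 2k(x,x') = 2\big(1 - \exp(-\gamma\|x - x'\|^2)\big),
\]
a continuous increasing function of $\|x - x'\|$ vanishing at the origin. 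Combining this with the uniform bound $\|z_d(e) - \widehat{z_d}(e)\| < \delta$ gives
\[
\|\mu_P - \mu_{\widehat{P}}\|_{\mathcal{H}} \le \sqrt{2\big(1 - \exp(-\gamma \delta^2)\big)},
\]
whose right-hand side tends to $0$ as $\delta \to 0$. One then picks $\delta$ small enough that this bound is below $\epsilon$ and sets $\widehat{f_\epsilon} = \widehat{f}$.

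The delicate points are bookkeeping rather than conceptual. First, the statement concerns the MMD, whereas the displayed estimator is its square; since both the bound above and its square vanish as $\delta \to 0$, either reading is handled identically. Second, the infinite-sample limit must be justified so that $\widehat{\text{MMD}}_k$ is legitimately replaced by $\|\mu_P - \mu_{\widehat{P}}\|_{\mathcal{H}}$, which follows from the consistency of the empirical mean embedding. The substantive step is the coupling: recognizing that sharing the noise vector lets the \emph{uniform} approximation of Proposition 2 transfer directly into a bound on the distributional distance through the continuity of the kernel feature map, rather than forcing one to argue weak convergence by hand.
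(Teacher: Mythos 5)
Your proof is correct and follows essentially the same route as the paper's: both couple $P$ and $\widehat{P}$ through the shared noise vector, invoke Proposition~2's uniform approximation, and conclude from the boundedness and continuity of the Gaussian kernel. The only difference is cosmetic --- the paper passes to the limit by bounded convergence applied to $\mathbb{E}_{e,e'}[k(z(e),z(e')) - 2k(z(e),\widehat{z}_\ell(e')) + k(\widehat{z}_\ell(e),\widehat{z}_\ell(e'))]$, whereas you reach the same conclusion quantitatively via Jensen's inequality in the RKHS and the closed form of $\|\phi(x)-\phi(x')\|_{\mathcal{H}}$ for the Gaussian kernel.
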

 
\begin{proof}
According to Prop. \ref{prop2} and with same notations, letting $\epsilon_\ell > 0$ go to 0 as $\ell$ goes to infinity, consider  ${\hat f}_\ell=(\hat f^{\ell}_1 \ldots \hat f^{\ell}_d)$ and $\hat{z_\ell}$ defined from ${\hat f}_\ell$ such that for all $e \in [0,1]^d$, $\|z(e)- \widehat{z}_\ell(e)\| < \epsilon_\ell$. 

Let $\{ \hat{\mathcal{D}_\ell} \}$ denote the infinite sample generated after $\hat{f_\ell}$.
The score of the CGNN $(\mathcal{G},\hat{f_\ell},{\cal E})$ is $ \widehat{\text{MMD}}_k(\mathcal{D}, \hat{\mathcal{D}_\ell}) =  \mathbb{E}_{e,e'}[k(z(e),z(e')) - 2  k(z(e),\widehat{z}_\ell(e')) + k(\widehat{z}_\ell(e), \widehat{z}_\ell(e'))]$.


As $\hat{f_\ell}$ converges towards $f$ on the compact $[0,1]^d$, using the bounded convergence theorem on a compact subset of $\mathbb{R}^{d}$,  $\widehat{z_\ell}(e) \rightarrow z(e)$ uniformly for $\ell \rightarrow \infty$, it follows from the Gaussian kernel function being bounded and continuous that $\widehat{\text{MMD}}_k(\mathcal{D}, \hat{\mathcal{D}_\ell}) \rightarrow 0$, when $\ell \rightarrow \infty$.
\end{proof}


CGNN benefits  from i) the representational power of generative networks to exploit distributional asymmetries; ii) the overall approximation of the joint distribution of the observational data to  exploit conditional independences, to handle bivariate and multivariate causal modeling.

\subsection{Searching causal graphs with CGNNs}

The exhaustive exploration of all DAGs with $d$ variables is super-exponential in $d$, preventing the use of brute-force methods for observational causal discovery even for moderate $d$.
Following \cite{tsamardinos2006max,nandy2015high}, we assume known skeleton for $\mathcal{G}$, obtained via
domain knowledge or a feature selection algorithm 
\cite{yamada2014high} under standard assumptions such as causal Markov,
faithfulness, and sufficiency. 
Given a skeleton on $X$ and the regularized MMD score~\eqref{eq:the_loss}, CGNN follows a greedy procedure to find $\cal G$ and $f_i$:
\begin{itemize}
\item Orient each $X_i - X_j$ as $X_i \to X_j$ or $X_j \to X_i$ by selecting the associated 2-variable CGNN with best score.
\item Follow paths from a random set of nodes until all nodes are reached. Edges pointing towards a visited node reveal cycles, so must be reversed.
\item For a number of iterations, reverse the edge 
  that leads to the maximum improvement of the score $S({\mathcal{G}}, \mathcal{D})$ over a $d$-variable CGNN, without creating a cycle. 
  
 At the end of this process, we evaluate a confidence score for any edge $X_i \rightarrow X_j$ as: 
\begin{equation}
\label{eq:conf}
V_{X_i \rightarrow X_j} = S({\mathcal{G}}, \mathcal{D}) - S({\mathcal{G} - \{X_i \rightarrow X_j\}}, \mathcal{D}).
\end{equation}

\end{itemize}
\subsection{Dealing with hidden confounders} The search method above relies on the
causal sufficiency assumption (no
confounders).  We relax this assumption 
as follows.  Assuming confounders, each edge
$X_i - X_j$ in the skeleton is  due to one out of three
possibilities: either $X_i \to X_j$, $X_j \leftarrow X_i$, or there exists an
unobserved variable $E_{i,j}$ such that $X_i \leftarrow E_{i,j} \to X_j$.
Therefore, each equation in the FCM is extended to: $X_i \leftarrow
f_i(X_{\Pa{i}}, E_{i, \Ne{i}}, E_i)$, where $\Ne{i} \subset \{1, \ldots d\}$ is
the set of indices of the variables adjacent to $X_i$ in the skeleton.
Each $E_{i,j} \sim \mathcal{E}$ represents the hypothetical
unobserved common causes of $X_i$ and $X_j$. For instance, hiding $X_1$
from the FCM in Fig.~\ref{figure:causalnetwork} would require
considering a confounder $E_{2,3}$.  Finally, when considering hidden
confounders, the above third search step considers three possible mutations of the
graph: reverse, add, or remove an edge. In this case, 
$\lambda |\hat{\mathcal{G}}|$ promotes simple graphs.

    \section{Experiments}\label{sec:exps}

CGNN is empirically validated and compared to the state of the art on observational causal discovery of i) cause-effect
relations (Section~\ref{sec:exps:two}); ii)  v-structures
(Section~\ref{sec:exps:three}); iii) multivariate causal structures with no confounders 
(Section~\ref{sec:exps:multi}); iv) multivariate causal structures when relaxing the no-confounder assumption (Section~\ref{sec:exps:confounded}).

\subsection{Experimental setting}
\begin{figure*}[ht]
    \begin{center}
    \begin{subfigure}{0.55\textwidth}
        \includegraphics[width=\textwidth]{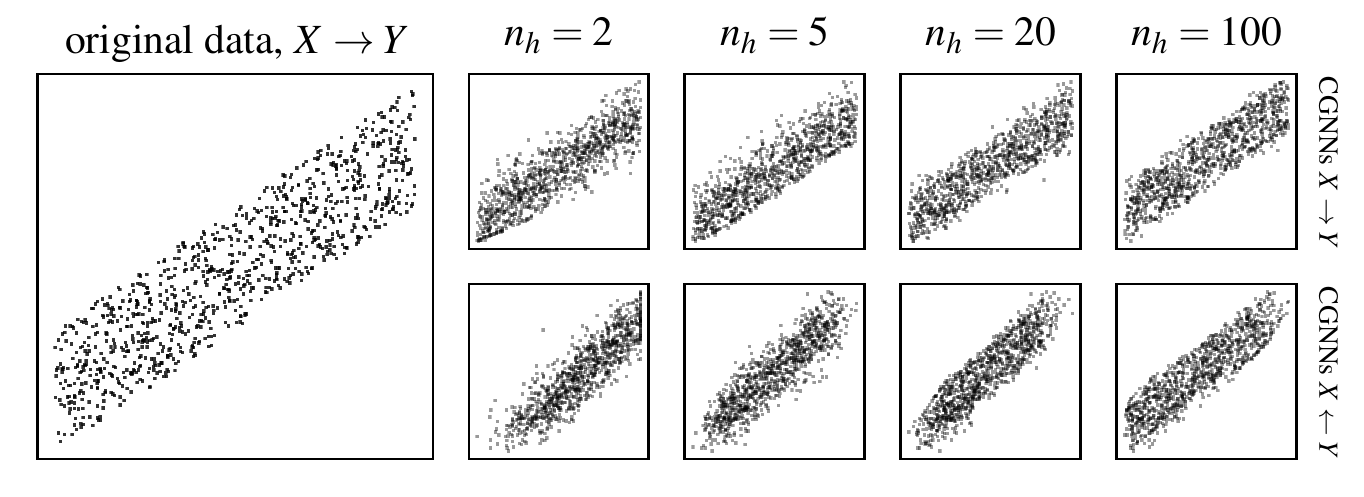}
        \caption{Samples.}
    \end{subfigure}
    \hskip 1cm
    \begin{subtable}{0.29\textwidth}
        \resizebox{\textwidth}{!}{
        \begin{tabular}{lllr}
            \toprule
            $n_h$ & $\widehat{\text{MMD}}_{X\rightarrow Y}$ & $\widehat{\text{MMD}}_{Y\rightarrow
            X}$ & Diff.\\
            \midrule
            2 & $32.0$ & $43.9$ &  \underline{$11.9$} \\
            5 & $29.6$ & $35.2$ &  \underline{$5.6 $} \\
            10 & $25.9$ & $32.5$ & \underline{$6.6$} \\
            20 & $25.7$ & $28.3$ & \underline{$2.6$} \\
            30 & $24.4$ & $26.8$ & \underline{$2.4$} \\
            40 &  $25.6$ & $25.6$ & $0.7$ \\
            50 &  $25.0$ & $25.0$ & $0.6$ \\
            100 & $24.9$ & $24.4$ & $-0.5$ \\
            \bottomrule
        \end{tabular}
        }
        \caption{Losses.}
    \end{subtable}
    \end{center}
    \caption{Samples and MMDs for CGNN models of different complexities
    (number of neurons) modeling the causal direction $X \to Y$ (top row) and the
      anticausal direction $X \leftarrow Y$ (bottom row) of a simple example.
      MMDs are averaged over $32$ runs, underlined numbers indicate statistical
    significance at $p=10^{-3}$.}
    \label{fig:para}
\end{figure*} 

MMD uses a sum of Gaussian kernels with bandwidths $\gamma \in \{0.005,
0.05,0.25,0.5, 1,5,50\}$. CGNN uses one-hidden-layer neural networks with $n_h$
ReLU units, trained with the Adam optimizer
\cite{2014arXiv1412.6980K} and initial learning rate of $0.01$, with full batch size $n=1500$. The generated data involved from noise variables are sampled anew
in each step. Each CGNN is trained for $n_{\text{train}} = 1000$ epochs and
evaluated on $n_{eval} = 500$ generated samples. Reported results are averaged over 32 runs for 
$\widehat{\text{MMD}}_k$ (resp. 64 runs for 
$\widehat{\text{MMD}}_k^m$). All experiments run on an Intel Xeon 2.7GHz CPU, and an NVIDIA GTX 1080Ti GPU. 

The most sensitive CGNN hyper-parameter is the number of hidden units $n_h$, governing the CGNN ability to model the causal mechanisms $f_i$: too small $n_h$, and data patterns may be missed; too large $n_h$, and overly complicated causal mechanisms might be retained. Overall, $n_h$ is problem-dependent, as illustrated on a toy problem  where two bivariate CGNNs are learned with $n_h = 2, 5, 20, 100$ (Fig.~\ref{fig:para}.a) from data generated by FCM: $X \sim \text{Uniform}[-2, 2], Y \leftarrow X + \text{Uniform}[0,
0.5]$.  Fig.~\ref{fig:para}.b shows the associated MMDs averaged on 32 independent runs,
and confirms the importance of cross-validating model capacity \cite{zhang2009identifiability}. 

\subsection{Discovering cause-effect relations}
\label{sec:exps:two}
Under the causal sufficiency assumption, the statistical dependence between two
random variables $X$ and $Y$ is either due to causal relation $X \to Y$ or $X \leftarrow Y$. The CGNN cause-effect accuracy is the fraction of edges in the graph skeleton that are rightly oriented, with Area Under Precision/Recall
curve (AUPR) as performance indicator. 

Five cause-effect inference datasets, covering a wide
range of associations, are used.  \emph{CE-Cha} contains 300 cause-effect pairs from the
challenge of \cite{guyon2013cepc}.  \emph{CE-Net} contains 300 artificial
cause-effect pairs generated using random distributions as causes, and neural
networks as causal mechanisms.
\emph{CE-Gauss} contains 300 artificial cause-effect pairs as generated by
\cite{mooij2016distinguishing}, using random mixtures of Gaussians as causes,
and Gaussian process priors as causal mechanisms.  \emph{CE-Multi} contains 300
artificial cause-effect pairs built with random linear and polynomial causal
mechanisms. In this dataset, we simulate additive or multiplicative noise,
applied before or after the causal mechanism.  \emph{CE-T\"ub} contains the 99
real-world scalar cause-effect pairs from the T\"ubingen dataset
\cite{mooij2016distinguishing}, concerning domains such as climatology,
finance, and medicine.  We set $n \leq 1500$. 

The baseline and competitor methods\footnote{\url{https://github.com/ssamot/causality}} include: i) the Additive Noise Model 
\emph{ANM} \cite{mooij2016distinguishing}, with Gaussian process regression
and HSIC independence test; ii)  the Linear Non-Gaussian Additive Model 
\emph{LiNGAM} \cite{shimizu2006linear}, a variant of Independent Component
Analysis to identify linear causal relations; iii) The Information Geometric Causal
Inference  \emph{IGCI} \cite{daniusis2012inferring}, with entropy estimator
and Gaussian reference measure; iv) the Post-Non-Linear model \emph{PNL}
\cite{zhang2009identifiability}, with HSIC test; v) The \emph{GPI} method
\cite{stegle2010probabilistic}, where the Gaussian process regression with
higher marginal likelihood is selected as causal direction; vi) the
Conditional Distribution Similarity statistic \emph{CDS}
\cite{fonollosa2016conditional}, which prefers the causal direction with
lowest variance of conditional distribution variances; vii)  the award-winning
method \emph{Jarfo} \cite{fonollosa2016conditional}, a random forest
classifier trained on the ChaLearn Cause-effect pairs and hand-crafted to
extract 150 features, including methods ANM, IGCI, CDS, and LiNGAM.
For each baseline and competitor method, a leave-one-dataset-out
scheme is used to select the best hyperparameters for each method (details omitted for brevity).  
\begin{table}[h!]
  \footnotesize
  \caption{Cause-effect relations: Area Under the Precision Recall curve on 5 benchmarks for the cause-effect experiments (weighted accuracy in parenthesis for T\"ub)}
  \label{table:pairwise}
  \centering
  \begin{tabular}{lccccc} 
    \toprule
    method & {Cha}  & {Net} & {Gauss} & {Multi} & {T\"ub }\\
    \midrule
    Best fit & 56.4 & 77.6 & 36.3 & 55.4 & 58.4 (44.9) \\
    LiNGAM & 54.3 & 43.7 & 66.5 & 59.3 & 39.7 (44.3) \\
    CDS  & 55.4 & 89.5 & 84.3 & 37.2 & 59.8 (65.5) \\
    IGCI  & 54.4 & 54.7 & 33.2 & 80.7 & 60.7 (62.6) \\
    ANM  & 66.3 & 85.1 & 88.9 & 35.5 & 53.7 (59.5) \\
    PNL  & 73.1 & 75.5 & 83.0 & 49.0 & 68.1 (66.2) \\
    Jarfo  & \underline{79.5} & \underline{92.7} & 85.3 & 94.6 & 54.5 (59.5)  \\
    GPI  & 67.4 & 88.4 & \underline{89.1} & 65.8 & 66.4 (62.6) \\
    \midrule
    \textbf{CGNN} ($\widehat{\text{MMD}}_k$) & 73.6 & 89.6 & 82.9 & \underline{96.6} & \underline{79.8} (74.4)  \\
    \textbf{CGNN} ($\widehat{\text{MMD}}^m_k$) & 76.5 & 87.0 & 88.3 & 94.2 & 76.9 (72.7) \\
    \bottomrule
  \end{tabular}
  \label{TableResultsPairwise}
\end{table}

As shown in Table~\ref{TableResultsPairwise},  i) linear regression methods are  dominated; ii) {CDS} and {IGCI} perform
well in some cases 
(e.g. when the entropy of causes is lower than those of effects); iii) {ANM} performs well when the additive noise assumption holds; iv) {PNL}, a generalization of {ANM},
compares favorably to the above methods; v)  {Jarfo}
performs well on artificial data
but badly on real examples. Lastly, generative methods {GPI} and \textbf{CGNN} ($\widehat{\text{MMD}}_k$) perform well on most datasets, including the real-world cause-effect pairs
{CE-T\"ub}, in counterpart for a higher computational cost (resp. 32 min on CPU for GPI and 24 min on GPU for CGNN). Using the linear MMD approximation \cite{dlp}, \textbf{CGNN} ($\widehat{\text{MMD}}^m_k$ as explained in Section \ref{scoremetric})  reduces the cost by a factor of 5 without hindering the performance. 
Overall, CGNN demonstrates competitive performance on the cause-effect inference problem, where it is necessary to discover distributional asymmetries.

\subsection{Discovering v-structures}
\label{sec:exps:three}
Considering random variables $(A, B, C)$ with skeleton
$A-B-C$, four causal structures are possible: the \emph{chain} $A \to B \to C$, the
\emph{reverse chain} $A \leftarrow B \leftarrow C$, the \emph{v-structure} $A
\to B \leftarrow C$, and the \emph{reverse v-structure} $A \leftarrow B \to C$.
Note that the chain, the reverse chain, and the reverse v-structure
are Markov equivalent, and therefore indistinguishable from each other using
statistics alone. This section thus examines the CGNN ability to identify v-structures.  

Let us consider an FCM with causal mechanisms $f_i = $ Identity
and Gaussian noise variables (e.g., $B \leftarrow A + E_B, E_B
\sim \mathcal{N}(0,1)$. As the joint distribution of one cause and its
effect is symmetrical, the bivariate methods used in the previous section do not apply
and the conditional independences among all three variables must be taken into account. 

The retained experimental setting trains a CGNN for every possible causal graph with skeleton $A-B-C$, and selects the one with minimal MMD. CGNN accurately discriminates the v-structures from the other ones $(0.202, 0.180)$, with a significantly lower MMD $(0.127)$ for the ground truth causal graph. This proof of concept shows the ability of CGNN to detect and exploit conditional independences among variables. 


\subsection{Discovering multivariate causal structures}
\begin{table*}[ht!]
  \caption{Average (std. dev.) results for the orientation of 20 artificial graphs given true skeleton (left), artificial graphs given skeleton with 20\% error (middle), and real protein network given true skeleton (right). $^*$ denotes statistical significance at $p=10^{-2}$.}
  \scriptsize
  \centering
  \begin{tabular}{l|ccc|ccc|ccc|}
    \toprule
     &  \multicolumn{3}{c}{Skeleton without error} & \multicolumn{3}{c}{Skeleton with 20\% of error} & \multicolumn{3}{c}{Causal protein network}\\
    method  & AUPR & SHD & SID & AUPR & SHD & SID & AUPR & SHD & SID \\
    \midrule
    \textit{Constraints}\\
    PC-Gauss & 0.67 (0.11) & 9.0 (3.4) & 131 (70) & 0.42 (0.06) & 21.8 (5.5) & 191.3 (73) & 0.19 (0.07) & 16.4 (1.3) & 91.9 (12.3)\\
    PC-HSIC & 0.80 (0.08) & 6.7 (3.2) & 80.1 (38) & 0.49 (0.06) & 19.8 (5.1) & 165.1 (67) & 0.18 (0.01) & 17.1 (1.1) & 90.8 (2.6) \\
    \midrule
    \textit{Pairwise}\\
    ANM & 0.67 (0.11) & 7.5 (3.0) & 135.4 (63) & 0.52 (0.10) & 19.2 (5.5) & 171.6  (66) & 0.34 (0.05) & 8.6 (1.3) & 85.9 (10.1)\\
    Jarfo & 0.74 (0.10)   & 8.1 (4.7) & 147.1 (94) & 0.58 (0.09) & 20.0 (6.8) & 184.8 (88) & 0.33 (0.02) & 10.2 (0.8) & 92.2 (5.2)\\
    \midrule
    \textit{Score-based}\\
        GES & 0.48 (0.13) & 14.1 (5.8) & 186.4 (86) & 0.37 (0.08) & 20.9 (5.5) & 209 (83) & 0.26 (0.01) & 12.1 (0.3) & 92.3 (5.4)\\ 
     LiNGAM & 0.65 (0.10) & 9.6 (3.8) & 171 (86) & 0.53 (0.10) & 20.9 (6.8) & 196 (83) & 0.29 (0.03) & 10.5 (0.8) & 83.1 (4.8) \\
     CAM & 0.69 (0.13)  & 7.0 (4.3) & 122 (76) & 0.51 (0.11) & \underline{15.6} (5.7) & 175 (80) & 0.37 (0.10) & 8.5 (2.2) & 78.1 (10.3)\\    
    \textbf{CGNN} ($\widehat{\text{MMD}}^m_k$) & 0.77 (0.09) &  7.1 (2.7) &  141 (59) & 0.54 (0.08) & 20 (10) & 179 (102) & 0.68 (0.07) & 5.7 (1.7) & 56.6 (10.0) \\
    \textbf{CGNN} ($\widehat{\text{MMD}}_k$) & \underline{0.89}* (0.09) & \underline{2.5}* (2.0) & \underline{50.45}* (45) & \underline{0.62} (0.12) & 16.9 (4.5) & \underline{134.0}* (55) & \underline{0.74}* (0.09)  & \underline{4.3}* (1.6) & \underline{46.6}* (12.4)\\
    \bottomrule
  \end{tabular}
  \label{table:multi}
\end{table*}
\label{sec:exps:multi}

Consider a random vector $X = (X_1,...,X_d)$. Our goal is to find the FCM of $X$ under the causal sufficiency assumption. At this point, we will assume known skeleton, so the problem reduces to orienting every edge. To that end, all experiments provide all algorithms {\em the
true graph skeleton}, so their ability to orient edges is compared in a fair way. This allows us to separate the task of orienting the graph from that of uncovering the skeleton.

\paragraph{Results on artificial data} 
We draw $500$ samples from $20$ training artificial causal graphs and $20$ test artificial causal graphs on 20 variables. Each variable has a number of parents uniformly drawn in $[[0,5]]$; $f_i$s are randomly generated polynomials involving additive/multiplicative noise.

We compare CGNN to the PC algorithm \cite{spirtes2000causation}, the score-based methods GES \cite{chickering2002optimal}, LiNGAM  \cite{shimizu2006linear}, causal additive model (CAM) \cite{peters2014causal}
and with the pairwise methods ANM and Jarfo. For PC, we employ the better-performing, order-independent version of the PC algorithm proposed by \cite{colombo2014order}. PC needs the specification of a conditional independence test. We compare PC-Gaussian, which
employs a Gaussian conditional independence test on Fisher z-transformations,
and PC-HSIC, which uses the HSIC conditional independence test with the Gamma approximation \cite{gretton2005kernel}. PC and GES are implemented in the \textit{pcalg} package \cite{kalisch2012causal}.

All hyperparameters are set on the training graphs in order to maximize the Area Under the Precision/Recall score (AUPR). For the Gaussian conditional independence test and the HSIC conditional independence test, the significance level achieving best result on the training set are respectively $0.1$ and $0.05$ .  For GES, the penalization parameter is set to $3$ on the training set.  For CGNN, $n_h$ is set to 20 on the training set. For CAM, the cutoff value is set to $0.001$.

Table \ref{table:multi} (left) displays the performance of all algorithms obtained by starting from the exact skeleton on the test set of artificial graphs and  measured from the AUPR (Area Under the Precision/Recall curve), the Structural Hamming Distance (SHD, the number of edge modifications to transform one graph into another)
and the Structural Intervention Distance (SID, the number of equivalent two-variable interventions between two graphs)
\cite{peters2013structural}.

CGNN obtains significant better results with SHD and SID  compared to the other algorithms when the task is to discover the causal from the true skeleton. Constraints based method PC with powerful HSIC conditional independence test is the second best performing method. It highlights the fact that when the skeleton is known, exploiting the structure of the graph leads to good results compared to pairwise methods using only local information. However CGNN and PC-HSIC are the most computationally expensive methods, taking an average of 4 hours on GPU and 15 hours on CPU, respectively.

The robustness of the approach is validated by randomly perturbing 20\% edges in the graph skeletons provided to all algorithms (introducing about 10 false edges over 50 in each skeleton). As shown on Table \ref{table:multi} (middle), and as could be expected, the scores of all algorithms are lower when spurious edges are introduced. Among the least robust methods are constraint-based methods; a tentative explanation is that they heavily rely on the graph structure to orient edges. By comparison pairwise methods are more robust because each edge is oriented separately. As CGNN leverages conditional independence but also distributional asymmetry like pairwise methods, it obtains overall more robust results when there are errors in the skeleton compared to PC-HSIC. 

CGNN obtains overall good results on these artificial datasets. It offers the advantage to  deliver a full generative model useful for simulation (while e.g., Jarfo and PC-HSIC only give the causality graph). 
 To explore the scalability of the approach, 5 artificial graphs with $100$ variables have been considered, achieving an AUPRC of $85.5 \pm 4$, in 30 hours of computation on four NVIDIA 1080Ti GPUs.  

\paragraph{Results on real-world data}
CGNN is applied to the protein network problem \cite{sachs2005causal}, using the Anti-CD3/CD28 dataset with  853 observational data points corresponding to general perturbations without specific interventions. All algorithms were given the skeleton of the causal graph \cite[Fig. 2]{sachs2005causal} with same hyper-parameters as in the previous subsection. We run each algorithm on 10-fold cross-validation. Table~\ref{table:multi} (right) reports average (std. dev.) results.

 Constraint-based algorithms obtain surprisingly low scores, because they cannot identify many V-structures in this graph. We confirm this by evaluating conditional independence tests for the adjacent tuples of nodes \textit{pip3}-\textit{akt}-\textit{pka}, \textit{pka}-\textit{pmek}-\textit{pkc}, \textit{pka}-\textit{raf}-\textit{pkc} and we do not find strong evidences for V-structure. Therefore methods based on distributional asymmetry between cause and effect seem better suited to this dataset. CGNN obtains good results compared to the other algorithms. Notably, Figure \ref{CGNN_cyto} shows that CGNN is able to recover the strong signal transduction pathway \textit{raf}$\rightarrow$\textit{mek}$\rightarrow$\textit{erk} reported in \cite{sachs2005causal} and corresponding to clear direct enzyme-substrate causal effect. CGNN gives important scores for edges with good orientation (solid line), and low scores (thinnest edges) to the wrong edges (dashed line), suggesting that false causal discoveries may be controlled by using the confidence scores defined in Eq. \eqref{eq:conf}.
  
\begin{figure}
    \centering

        \begin{tikzpicture}
          \def \h {11}
          \def \radius {1.8cm}
          \def \radiuscircle {0.2cm}
          \fontsize{6.0pt}{6.5pt}\selectfont
          \node[draw, circle,minimum size=\radiuscircle](praf) at ({360/\h * (1 - 1)}:\radius) {$raf$};
          \node[draw, circle,minimum size=\radiuscircle](pmek) at ({360/\h * (2 - 1)}:\radius) {$mek$};
          \node[draw, circle,minimum size=\radiuscircle](plcg) at ({360/\h * (3 - 1)}:\radius) {$plcg$};
          \node[draw, circle,minimum size=\radiuscircle](PIP2) at ({360/\h * (4 - 1)}:\radius) {$pip_2$};
          \node[draw, circle,minimum size=\radiuscircle](PIP3) at ({360/\h * (5 - 1)}:\radius) {$pip_3$};
          \node[draw, circle,minimum size=\radiuscircle](p44/42) at ({360/\h * (6 - 1)}:\radius) {$erk$};
          \node[draw, circle,minimum size=\radiuscircle](pakts473) at ({360/\h * (7 - 1)}:\radius) {$akt$};
          \node[draw, circle,minimum size=\radiuscircle](PKA) at ({360/\h * (8 - 1)}:\radius) {$pka$};
          \node[draw, circle,minimum size=\radiuscircle](PKC) at ({360/\h * (9 - 1)}:\radius) {$pkc$};
          \node[draw, circle,minimum size=\radiuscircle](P38) at ({360/\h * (10 - 1)}:\radius) {$p38$};
          \node[draw, circle,minimum size=\radiuscircle](pjnk) at ({360/\h * (11 - 1)}:\radius) {$jnk$};
          \draw[dash dot,pil, line width = 0.30pt] (praf) -- (PKA);
          \draw[pil, line width = 0.32pt] (PKA) -- (P38);
          \draw[pil, line width = 0.33pt] (PKC) -- (praf);
          \draw[dash dot,pil, line width = 0.35pt] (pjnk) -- (PKA);
          \draw[dash dot,pil, line width = 0.36pt] (PKC) -- (plcg);
          \draw[,pil, line width = 0.36pt] (PIP3) -- (plcg);
          \draw[pil, line width = 0.38pt] (PKC) -- (P38);
          \draw[dash dot,pil, line width = 0.40pt] (PIP2) -- (plcg);
          \draw[pil, line width = 0.43pt] (PKC) -- (pjnk);
          \draw[pil, line width = 0.44pt] (PIP2) -- (PKC);
          \draw[pil, line width = 0.48pt] (praf) -- (pmek);
          \draw[pil, line width = 0.52pt] (PKC) -- (pmek);
          \draw[pil, line width = 0.62pt] (PKA) -- (pmek);
          \draw[pil, line width = 0.89pt] (PIP2) -- (PIP3);
          \draw[pil, line width = 1.38pt] (PIP3) -- (pakts473);
          \draw[pil, line width = 1.47pt] (pmek) -- (p44/42);
          \draw[pil, line width = 1.55pt] (PKA) -- (pakts473);
          \draw[pil, line width = 1.60pt] (PKA) -- (p44/42);
         \end{tikzpicture}

  \caption{Causal protein network obtained with CGNN} 
\label{CGNN_cyto}

\end{figure}
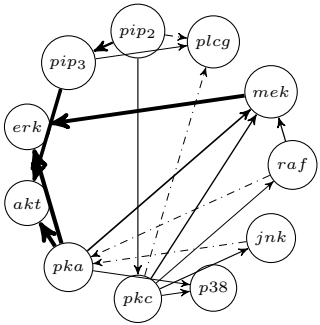

\subsection{Dealing with hidden confounders}
\label{sec:exps:confounded}

As real data often includes unobserved confounding variables, the robustness of CGNN is assessed by considering the previous artificial datasets while hiding some of the 20
observed variables in the graph. Specifically three random variables that cause at least two others in the same graph are hidden.  Consequently, the skeleton now includes additional edges $X-Y$ for all pairs of variables $(X,Y)$ that are consequences of the same hidden cause (confounder).  The goal in
this section is to orient the edges due to direct causal relations, and to remove those due to confounders.

We compare CGNN to the RFCI algorithm (Gaussian or HSIC conditional independence tests) \cite{colombo2012learning},
which is a modification of the PC algorithm that accounts for hidden variables. For CGNN, we set the hyperparameter $\lambda = 5 \times 10^{-5}$ fitted on the training graph dataset. Table~\ref{table:confounder} shows that CGNN is robust to confounders. Interestingly, true causal edges have high confidence, while edges due to confounding effects are removed or have low confidence. 

\begin{table}[h!]
\scriptsize
  \caption{AUPR, SHD and SID on causal discovery with confounders. $^*$ denotes significance at $p=10^{-2}$.}
  \centering
  \begin{tabular}{lcccr}
    \toprule
    method & AUPR &  SHD & SID \\
    \midrule
    RFCI-Gaussian & 0.22 (0.08) & 21.9 (7.5) & 174.9 (58.2) \\
    RFCI-HSIC & 0.41 (0.09) & 17.1 (6.2) & 124.6 (52.3) \\
    Jarfo & 0.54 (0.21) & 20.1 (14.8) & 98.2 (49.6) \\
    \midrule
    \textbf{CGNN} ($\widehat{\text{MMD}}_k$) & \underline{0.71}* (0.13) & \underline{11.7}* (5.5) & \underline{53.55}* (48.1)\\
    \bottomrule
  \end{tabular}
  \label{table:confounder}
  \vspace{-.5cm}
\end{table}

    \section{Conclusion}
\label{sec:future}

We introduced CGNN, a new framework to learn functional causal models from observational data based on generative neural networks. CGNNs minimize the maximum mean discrepancy between their generated samples and the observed data.
CGNNs combines the power of deep learning and the interpretability of causal models. Once trained, CGNNs are causal models of the world able to simulate the outcome of interventions.
Future work includes i) extending the proposed approach to categorical and temporal data, ii) characterizing sufficient identifiability conditions for the approach, and iii) improving the computational efficiency  of CGNN.

    \clearpage
    \newpage
    \bibliographystyle{named}
    \bibliography{cgnn}
\end{document}